\documentclass[journal,twoside]{IEEEtran}
\usepackage{amsmath,amsfonts}
\usepackage{algorithmic}
\usepackage{algorithm}
\usepackage{array}
\usepackage[caption=false,font=normalsize,labelfont=sf,textfont=sf]{subfig}
\usepackage{textcomp}
\usepackage{stfloats}
\usepackage{url}
\usepackage{verbatim}
\usepackage{graphicx}
\usepackage{cite}
\usepackage{tikz} 
\usetikzlibrary{shapes.geometric, arrows.meta, positioning}
\usepackage{amsthm}
\newtheorem{theorem}{Theorem}
\usepackage{booktabs}
\usepackage{siunitx}
\usepackage{multirow}
\begin{document}

\title{The Direct Integration Theorem: A Rigorous Framework for Consistent Discrete Solutions of the Inverse Radon Problem}

\author{Mikhail~G.~Mozerov
\IEEEcompsocitemizethanks{\IEEEcompsocthanksitem M.~G.~Mozerov is with the Institute for Information Transmission Problems, Russian Academy of Sciences, Moscow, 127051 Russia\protect\\
E-mail: mozer@iitp.ru}
\thanks{}}
\markboth{ArXiv paper, 2026}{Mozerov: Direct Integration Theorem}

\maketitle

\begin{abstract}
This paper presents a novel Direct Integration Theorem (DIT), derived as a non-trivial corollary of the classical Central Slice Theorem (CST). The DIT provides a mathematically consistent transition from the continuous to the discrete domain—a fundamental challenge in computed tomography—thereby eliminating the need for frequency-domain interpolation without resorting to conventional ramp-filtering. The proposed approach circumvents two principal limitations inherent in traditional methods: (i) the zero-frequency singularity and spectral distortions introduced by the mandatory ramp-filtering step, and (ii) discretization inaccuracies associated with frequency-domain interpolation.
Based on the DIT, we develop a rigorous framework for consistent discrete solutions of the inverse Radon problem. Mathematical modeling demonstrates that this approach achieves quasi-exact reconstruction, with errors constrained solely by sampling parameters and grid geometry. Furthermore, while Filtered Back Projection (FBP) inherently distorts the variance of the reconstructed image, the DIT-based algorithm preserves it. Comparative simulations confirm that the proposed method eliminates common artifacts, such as intensity cupping, and consistently outperforms FBP in terms of PSNR, SSIM, and reprojection fidelity, faithfully restoring the original image's statistical characteristics.
\end{abstract}

\begin{IEEEkeywords}
Central Slice Theorem (CST), Computed Tomography (CT), Direct Integration Theorem (DIT), Inverse Radon Problem, Image Reconstruction, Statistical Integrity, Variance Preservation, Spectral Distortion, Ramp-filtering, Fourier Methods.
\end{IEEEkeywords}

\section{Introduction}
\IEEEPARstart{T}{he} forward Radon transform~\cite{radon1917} and the reconstruction of functions from projections constitute a fundamental challenge in computational mathematics. This field gained significant practical importance with the advent of imaging systems that record physical processes modeled by the forward Radon operator, most notably the Computed Tomography (CT) scanner~\cite{hounsfield1973}. While numerous studies have addressed the inverse problem~\cite{natterer2001, kak2001, beylkin2003, deans2007radon}, the most widely adopted approach remains the Filtered Back Projection (FBP) algorithm~\cite{ramachandran1971, shepp1974}. Although rooted in the Central Slice Theorem~\cite{bracewell1956}, FBP relies on 1D radial filtering and back-projection—steps that introduce specific discretization challenges that this work aims to transcend.

Theoretical studies emphasize that the inverse problem, when formulated via the FBP formula with angular integration, is inherently ill-posed~\cite{tikhonov1977solutions}. This is primarily due to the application of the ramp filter ($|\omega|$), which fundamentally alters the statistical properties of the reconstructed image and introduces uncertainty at the origin of the frequency domain~\cite{kak2001}. In the early stages of CT development, the primary objective was not the absolute measurement of the reconstructed density function, but rather the generation of relative distributions for visualization~\cite{brooks1976}. Consequently, statistical parameters such as the mean and standard deviation were often treated as adjustable and determined through empirical calibration~\cite{kak2001}. However, even for such 'normalized' solutions, specific cases—such as the uniform disk reconstruction analyzed in this work—demonstrate that linear normalization does not guarantee a uniform error distribution across the reconstructed field. The uniform disk is a particularly revealing case, as it exposes radial dependencies and instabilities in the reconstructed statistics that are inherent to the geometry of the FBP operator~\cite{kak2001}. The FBP workflow is illustrated in Fig.~\ref{fig:fig1}(a).

Another class of solutions within the integral transform framework comprises frequency-domain interpolation methods, often referred to as direct Fourier methods~\cite{stark1981, goitein1972}. These techniques aim to reconstruct the 2D image spectrum by interpolating 1D projection spectra onto a Cartesian frequency grid, as illustrated in the algorithmic scheme in Fig.~\ref{fig:fig1}(b). Their initial development was motivated less by reconstruction accuracy—as the authors acknowledged a marginal degradation in image quality compared to FBP—and more by computational efficiency, reducing time complexity from $O(N^3)$ to $O(N^2 \log N)$. However, the fundamental drawback of these methods lies in the problematic nature of interpolation within the Fourier domain. Unlike spatial coordinates, the frequency space metric is inversely related to distance and lacks the smoothness required for reliable local estimation. Furthermore, resampling 1D spectra for each projection angle often conflicts with the discrete periodicity of the data. Since the central slices of the 2D transform coincide with the Cartesian grid only along the axes and diagonals, any off-axis interpolation introduces significant artifacts. Consequently, many such methods remain heuristic, relying on \textit{ad hoc} algorithmic adjustments rather than rigorous mathematical derivations~\cite{stark1981}.

As previously noted, the standard FBP algorithm fails to provide an accurate reconstruction of statistical properties. Several studies have explored the discrepancy between ``algebraic'' and ``analytical'' approaches, identifying why FBP introduces systematic errors in the discrete case~\cite{zeng2004, mueller1999, hanson1990}. These inaccuracies are typically attributed to the following factors:

    \textbf{Non-unitarity:} The discrete Radon operator is non-unitary, meaning that signal energy in a discrete sinogram is distributed differently than in a continuous transform.
    Consequently, the Ramachandran filter~\cite{ramachandran1971}, designed for the unitary case, becomes suboptimal.
      
    \textbf{Sampling density:} In FBP, low-frequency components overlap at the center of Fourier space, while high-frequency components remain sparse. 
    Although the ramp filter attempts to compensate for this via weighting, 
    it inevitably introduces spectral distortions.

The advancement of computational power and the proliferation of parallel architectures, such as GPUs, have revitalized the application of iterative reconstruction techniques, including classic ART~\cite{gordon1970}, SIRT~\cite{gilbert1972}, and SART~\cite{andersen1984}. Modern iterations of these methods—notably ASiR-V, Veo, and SAFIRE~\cite{geyer2015}—have established themselves as industry standards in computed tomography (CT). However, despite their efficacy in noise reduction, several challenges persist: the selection of optimal stopping criteria, the complexity of quantifying reconstruction accuracy, and significantly higher computational costs relative to analytical methods. Furthermore, contemporary approaches based on Total Variation (TV)~\cite{sidky2008} and Compressed Sensing~\cite{candes2006} exhibit high sensitivity to regularization parameters, highlighting the necessity for more robust fundamental solutions. While the evolution of CT has spanned from early analytical solutions to sophisticated deep-learning-based systems~\cite{mccollough2023milestones}, the search for advanced methods that balance the efficiency of analytical algorithms with the preservation of statistical signal integrity remains a critical challenge for the scientific community.

In this work, we propose an alternative solution for reconstructing a two-dimensional function from its projections within the integral transform paradigm. Unlike traditional methods, the proposed approach directly computes the 2D spectrum of the reconstructed function from the sinogram data, as illustrated in the algorithmic scheme in Fig.~\ref{fig:fig1}(c). This is made possible by the Direct Integration Theorem (DIT)—a non-trivial corollary of the classical Central Slice Theorem. A key advantage of the DIT-based method is the complete elimination of the one-dimensional Fourier transform stage from the reconstruction pipeline. By removing the dependence on a one-dimensional frequency variable, the DIT ensures a mathematically rigorous transition from a continuous model to a discrete implementation, thereby bypassing the discretization-related artifacts inherent in conventional frequency-domain filtering.

Mathematical modeling confirms that the proposed algorithm provides a quasi-exact reconstruction, with numerical errors governed solely by standard sampling parameters and the inherent mismatch between polar and Cartesian grids. Unlike FBP, where the linear ramp-filtering stage fundamentally distorts the variance of the reconstructed values, the DIT-based algorithm preserves the statistical integrity of the image. Comparative simulations confirm that the proposed method eliminates common FBP-related artifacts, such as 'cupping' (intensity attenuation toward the center), and consistently outperforms FBP across standard fidelity metrics, including PSNR and SSIM.

In an ideal formulation, the re-projection of the reconstructed image—obtained via the discrete forward Radon transform—should perfectly match the original sinogram. In practice, however, this level of numerical fidelity is unattainable due to the requisite selection of interpolation kernels for both forward and inverse operations, as well as inherent discretization errors. Nevertheless, we employ this ``reversibility'' criterion as a primary metric to compare the performance of the proposed algorithm against the standard FBP and its statistically consistent variants.

To isolate and evaluate the impact of coordinate grid transformations, we propose an additional ``reconstructibility'' criterion. This metric involves a double rotation of the test image (rotating by $45^\circ$ and then back to the original orientation) using bicubic interpolation. This procedure introduces a controlled discretization error into both the image intensity distribution and its corresponding sinogram. Evaluation results indicate that our discrete algorithm recovers the image with an error level of approximately $\pm 0.5$~dB, effectively matching the baseline error introduced by the rotations. Furthermore, the algorithm outperforms standard sinogram-based reconstruction in terms of numerical accuracy. These findings challenge the assumption that iterative methods can offer significant improvements for noise-free data obtained in an ideal model, as the error appears to be dominated by fundamental grid geometry rather than algorithmic limitations.

While this work is primarily theoretical, focusing on the fundamental derivation of the DIT-based algorithm, we also address its practical performance. Although the impact of noise and incomplete data is secondary to our primary objective, the experimental section includes comparative simulations involving additive white Gaussian noise (AWGN) and a reduced number of projections to evaluate the algorithm's robustness and characterize the resulting artifacts.

The main contributions of this work are summarized as follows:
\begin{itemize}
    \item We derive and prove the Direct Integration Theorem (DIT) — a non-trivial corollary of the Central Slice Theorem that establishes a direct analytical path from projection data to the two-dimensional spectrum, enabling a consistent discrete formulation.
     \item We propose a novel reconstruction algorithm based on this theorem (hereafter referred to as the DIT algorithm). By eliminating the need for explicit ramp filtering, the algorithm bypasses the zero-frequency singularity and the spectral distortions inherent in conventional analytical methods.
    \item Through computer simulations, we demonstrate that the proposed algorithm serves as a viable alternative to Filtered Back Projection (FBP). It specifically addresses the non-unitarity of the discrete Radon operator by preserving the statistical integrity of the original image and ensuring a more uniform error distribution.
\end{itemize}

The rest of the paper is structured as follows: Section 2 establishes the theoretical framework and the DIT theorem. Section 3 describes our proposed discrete algorithm, while Section 4 validates the approach through computer simulations. Concluding remarks are provided in Section 5.

\begin{figure}[!t]
    \centering  
    \includegraphics[width=0.90\columnwidth]{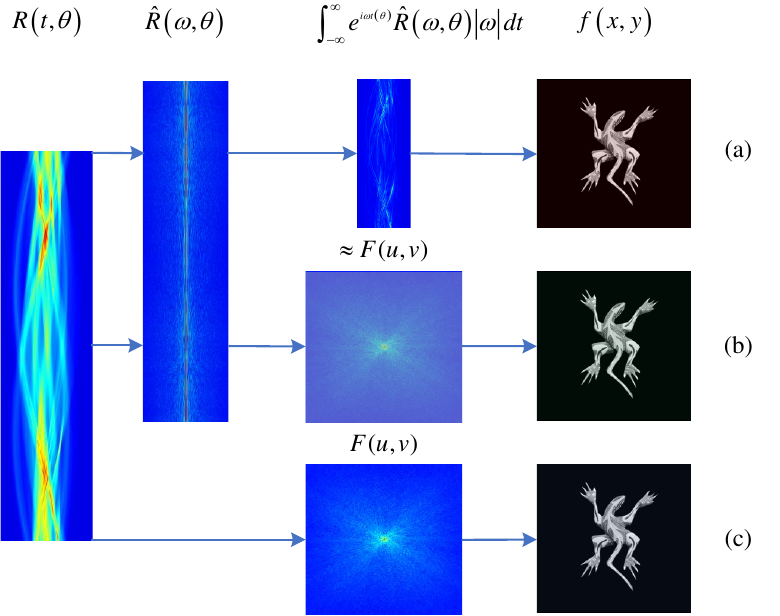}  
    \caption{Comparison of reconstruction workflows: (a) the conventional filtered back-projection (FBP) method; (b) the traditional interpolation-based Fourier reconstruction scheme; and (c) the proposed direct 2D spectrum computation approach.}
\label{fig:fig1} 
\end{figure}

\section{Direct Integration Theorem}
Let us first formulate the problem of reconstructing an unknown function \(f(x,y)\) from known projection data \(R(t,\theta)\) within the classical approach based on the Radon transform.

Let \(f(x,y)\) be a function of two real variables defined on the entire plane and decaying sufficiently fast at infinity (so that the corresponding improper integrals converge). Then the Radon transform of the function \(f(x,y)\) is defined as
\begin{equation}
\label{radon}
R(t,\theta) = \int_{-\infty}^{\infty} f\bigl(t\cos\theta - z\sin\theta,\; t\sin\theta + z\cos\theta\bigr)\,dz.
\end{equation}

The function \(R(t,\theta)\) is also called the projection of \(f(x,y)\) onto the line given by \(x\cos\theta + y\sin\theta = t\) (or, in the rotated coordinate system \((t,z)\), onto the coordinate line \(t = \text{const}\), where the \(z\)-axis runs along the line). This integral projection is illustrated in Fig.~\ref{fig:radon_geometry}, where the dashed lines represent the paths of integration that are perpendicular to the projection axis.
\begin{figure}[!t]
    \centering  
    \includegraphics[width=0.65\columnwidth]{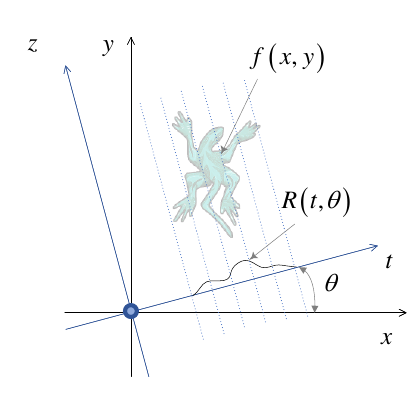}  
\caption{Geometry of the Radon integral projection $R(t, \theta)$ of an object $f(x, y)$ at an angle $\theta$.}
\label{fig:radon_geometry}  
\end{figure}

The reconstruction problem is then stated as follows: find the unknown function \(f(x,y)\) from the known projections \(R(t,\theta)\). 
The analytic solution is usually obtained using the central slice theorem, which relates the two‑dimensional Fourier transform of the unknown function to the one‑dimensional Fourier transform of the projections \(R(t,\theta)\).

The direct two‑dimensional Fourier transform:
\begin{equation}
\label{fourier}
F(u,v) = \int_{-\infty}^{\infty}\int_{-\infty}^{\infty} f(x,y)\, e^{-i(ux+vy)}\,dx\,dy,
\end{equation}
and its inverse:
\begin{equation}
\label{inv_fourier}
f(x,y) = \int_{-\infty}^{\infty}\int_{-\infty}^{\infty} F(u,v)\, e^{i(ux+vy)}\,du\,dv.
\end{equation}

Central slice theorem:
\begin{equation}
\label{CST}
\begin{aligned}
\hat{R}(\omega, \theta) &= \int_{-\infty}^{\infty} R(t, \theta) e^{-i\omega t} dt \\
&= \iint_{-\infty}^{\infty} f(x, y) e^{-i\omega(x \cos \theta + y \sin \theta)} dx \, dy \\
&= F(\omega \cos \theta, \omega \sin \theta).
\end{aligned}
\end{equation}
From the central slice theorem, after a change of variables, we obtain the well‑known analytic reconstruction formula:
\begin{equation}
\label{recon}
f(x,y) \!=\! \frac{1}{2\pi} \! \int_{0}^{\pi} \! \! \int_{-\infty}^{\infty} \! \! e^{i\omega(x\cos\theta + y\sin\theta)} \hat R(\omega,\theta) |\omega| d\omega d\theta.
\end{equation}

As we can see, the analytical formulation of the inverse Radon problem traditionally involves a one-dimensional Fourier transform followed by a ramp filter ($|\omega|$). This dependency leads to the well-known drawbacks of the Filtered Back-Projection (FBP) method. Therefore, we establish and prove below the Direct Integration Theorem (DIT), which enables a consistent discrete solution to the inverse Radon problem—requiring neither a one-dimensional Fourier transform stage nor the application of a ramp filter.

Let the direct integration function $D(u,v)$ be defined as:
\begin{equation}
\label{DIF}
D(u,v) = \begin{cases} 
\displaystyle\int_{-\infty}^{\infty} R\bigl(t,\operatorname{atan2}(v,u)\bigr) e^{-i\sqrt{u^2+v^2}t}dt, \\
\hfill \text{if } v>0 \lor (v=0 \land u>0); \\[1.5ex]
\displaystyle\int_{-\infty}^{\infty} R(t,\theta_0)dt, \hfill \text{if } u=0, v=0; \\[1.5ex]
D^{*}(-u,-v), \hfill \text{otherwise},
\end{cases}
\end{equation}
where $\theta_0$ is an arbitrary value in $[0,\pi)$, and $(\cdot)^*$ denotes the complex conjugate.

\begin{theorem}[Direct Integration Theorem]
If the function $f(x,y)$ is real, then the direct integration function $D(u,v)$ defined in~\eqref{DIF} is identically equal to the two‑dimensional Fourier transform $F(u,v)$ of $f(x,y)$ given in~\eqref{fourier}:

\[
D(u,v) = F(u,v) \quad \text{for all } (u,v)\in\mathbb{R}^2.
\]
\end{theorem}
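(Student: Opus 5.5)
The plan is to check the three branches of definition~\eqref{DIF} separately and reduce each to the Central Slice Theorem~\eqref{CST}. Only in the third branch is the reality of $f$ actually needed.

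\emph{Case 1: $v>0$, or $v=0$ and $u>0$.} Here I introduce polar coordinates $\omega=\sqrt{u^2+v^2}>0$ and $\theta=\operatorname{atan2}(v,u)$. The branch condition means exactly that $\theta\in[0,\pi)$, which is the angular range on which $R(t,\theta)$ is taken as given. Then $u=\omega\cos\theta$ and $v=\omega\sin\theta$. The first branch of~\eqref{DIF} is literally $\hat R(\omega,\theta)$, so~\eqref{CST} gives
\[
D(u,v)=F(\omega\cos\theta,\omega\sin\theta)=F(u,v).
\]
To justify the interchange of integrals behind~\eqref{CST}, I will invoke the standing assumption that $f$ decays fast enough for Fubini to apply. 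Concretely, I substitute~\eqref{radon} and rotate coordinates $(t,z)\mapsto(x,y)$, which has Jacobian $1$, so that $t=x\cos\theta+y\sin\theta$.

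\emph{Case 2: $u=v=0$.} By~\eqref{radon} and the same rotation, $\int R(t,\theta_0)\,dt=\iint f\,dx\,dy=F(0,0)$ for every $\theta_0$. Equivalently, this is~\eqref{CST} at $\omega=0$. This is why the choice of $\theta_0$ is irrelevant.

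\emph{Case 3: all remaining $(u,v)$, i.e.\ $v<0$, or $v=0$ and $u<0$.} Then $(-u,-v)$ lies in the Case~1 region, so $D(u,v)=D^*(-u,-v)=F^*(-u,-v)$. Since $f$ is real, conjugating~\eqref{fourier} gives $F^*(-u,-v)=\iint f(x,y)e^{-i(ux+vy)}\,dx\,dy=F(u,v)$, which is the Hermitian symmetry. This completes the cover of $\mathbb{R}^2$, because the three regions partition the plane.

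The main obstacle is not analytic. It is bookkeeping: making sure the atan2 convention places $\theta$ exactly in $[0,\pi)$ on the Case~1 half-plane, with the boundary ray $v=0,u>0$ giving $\theta=0$. One must also note that the definition is not circular, since Case~3 refers only to Case~1 points. A remark worth adding is that one could equally extend $R$ to $\theta\in[\pi,2\pi)$ via $R(t,\theta+\pi)=R(-t,\theta)$. That route avoids using reality, but it would not reproduce the conjugate form of~\eqref{DIF}, which is why the hypothesis that $f$ is real is needed here.
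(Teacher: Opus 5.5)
Your proposal is correct and matches the paper's proof essentially step for step. It uses the same three-way partition of the plane: Case~1 reduces to the Central Slice Theorem via $u=\omega\cos\theta$, $v=\omega\sin\theta$ with $\theta\in[0,\pi)$; Case~2 uses the angle-independence of $\int R(t,\theta_0)\,dt=F(0,0)$; and Case~3 uses the Hermitian symmetry of $F$ for real $f$. Your extra remarks are correct additions that the paper does not spell out: the Fubini and unit-Jacobian rotation behind the CST, the non-circularity of the definition, and the observation that $R(t,\theta+\pi)=R(-t,\theta)$ would give a version that does not need reality.
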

\begin{proof}
Consider an arbitrary point $(u,v)\in\mathbb{R}^2$. Three cases are possible.

\textbf{Case 1:} $v>0$ or $(v=0 \land u>0)$.  
Set $\omega = \sqrt{u^2+v^2} > 0$ and $\theta = \operatorname{atan2}(v,u)$. For the specified half-plane, $\theta \in [0,\pi)$. By definition \eqref{DIF}:
\[
D(u,v)=\int_{-\infty}^{\infty} R\bigl(t,\theta\bigr)\, e^{-i\omega t}\,dt.
\]
According to the central slice theorem \eqref{CST}, the one-dimensional Fourier transform of the projection $R(t,\theta)$ equals the two-dimensional Fourier transform of $f$ at the point $(\omega\cos\theta, \omega\sin\theta)$:
\[
\int_{-\infty}^{\infty} R(t,\theta)\, e^{-i\omega t}\,dt = F(\omega\cos\theta, \omega\sin\theta).
\]
Given that $\omega = \sqrt{u^2+v^2}$ and $\theta = \operatorname{atan2}(v,u)$, it follows from the properties of polar coordinates that $u = \omega\cos\theta$ and $v = \omega\sin\theta$. Consequently,
\[
D(u,v)=F(u,v).
\]

\textbf{Case 2:} $(u,v)=(0,0)$.  
By definition \eqref{DIF}:
\[
D(0,0)=\int_{-\infty}^{\infty} R(t,\theta_0)\,dt,
\]
where $\theta_0 \in [0,\pi)$ is arbitrary. The integral of a projection over $t$ is independent of the angle and corresponds to the total integral of $f$ over the $xy$-plane:
\[
\int_{-\infty}^{\infty} R(t,\theta_0)\,dt = \iint_{\mathbb{R}^2} f(x,y)\,dxdy = F(0,0).
\]
Thus, $D(0,0)=F(0,0)$.

\textbf{Case 3:} Remaining points (lower half-plane and the negative $u$-axis).  
For these points, $(-u,-v)$ falls into the domain of Case 1. By definition \eqref{DIF}:
\[
D(u,v)=D^{*}(-u,-v).
\]
From Case 1, we have $D(-u,-v)=F(-u,-v)$. Since $f(x,y)$ is real-valued, its Fourier transform satisfies the Hermitian symmetry $F(-u,-v)=F^{*}(u,v)$. Substituting these into the definition, we get:
\[
D(u,v) = \left( F(-u,-v) \right)^{*} = \left( F^{*}(u,v) \right)^{*} = F(u,v).
\]

Combining all three cases, we conclude that $D(u,v)=F(u,v)$ for all $(u,v)\in\mathbb{R}^2$.
\end{proof}

Using the proven theorem, the analytic reconstruction formula for the unknown function is readily obtained:
\begin{equation}
\label{inv_D}
f(x,y) = \int_{-\infty}^{\infty}\int_{-\infty}^{\infty} D(u,v)\, e^{i(ux+vy)}\,du\,dv,
\end{equation}
where the function $D(u,v)$ is defined in~\eqref{DIF}.

Comparing the two reconstruction formulas derived from the CST~\eqref{recon} and the DIT~\eqref{inv_D}, one observes that in~\eqref{recon}, the one-dimensional Fourier transform is written explicitly, and more importantly, the ramp filter is specified in the frequency domain. This leads to an indeterminacy at the origin, as well as energy redistribution that distorts the variance of the original signal. In contrast, equation~\eqref{inv_D} has neither the indeterminacy nor the energy redistribution.

The direct integration function $D(u,v)$  allows for a point-wise evaluation on a Cartesian grid, as illustrated in Fig.~\ref{fig:discrete_grid}. Since each value is computed independently, the continuous domain definition transitions naturally to the discrete domain without requiring any additional frequency-domain interpolation. Beyond the practical benefit of halving the computational load when calculating the discrete spectrum, this formulation's reliance on Hermitian symmetry is what makes the continuous-domain proof truly accessible. Without it, both the spectral representation and the derivation would become cumbersome and significantly less intuitive; instead, the symmetry yields a result that is both elegant and mathematically concise.

\begin{figure}[!t]
    \centering
    \includegraphics[width=0.8\columnwidth]{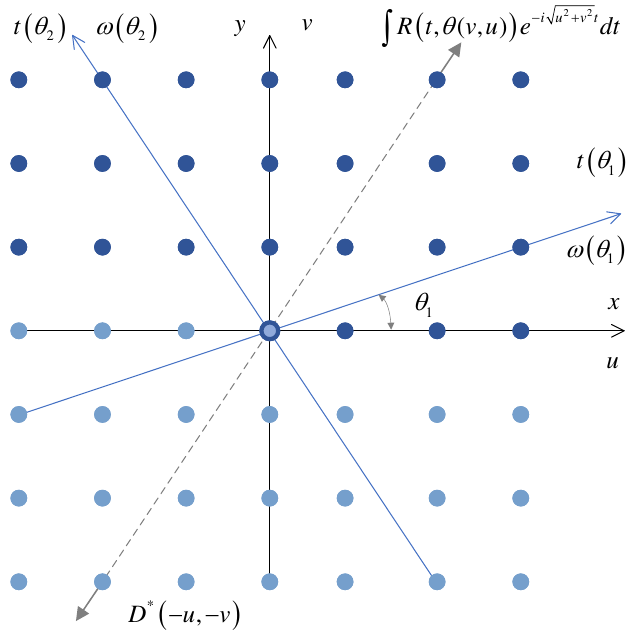}
    \caption{Sampling of the continuous direct integration function $D(u,v)$ on a discrete Cartesian grid. Each grid point is computed independently, facilitating a direct transition to the discrete domain.}
    \label{fig:discrete_grid}
\end{figure}

\section{Proposed Reconstruction Method}
\label{sec:method}
\begin{figure}[!t]
\centering
\begin{tikzpicture}[
    node distance=1.1cm,
    block/.style={rectangle, draw, fill=blue!5, text width=3.5cm, align=center, minimum height=0.9cm, font=\small},
    arrow/.style={thick, -{Stealth[scale=1.2]}}
]
\node (input) [block] {Discrete Projections $\mathbf{R}_{t,a}$};
\node (interp) [block, below=of input] {Angular Interpolation\\(Eq. \ref{interpolation_formula})};
\node (spectrum) [block, below=of interp] {Spectrum Calculation\\$\mathbf{D}_{n,m}$ (Eq. \ref{DIF_discrete})};
\node (shift) [block, below=of spectrum] {Cyclic Index Shift\\(FFT Alignment) (Eq.\ref{shift_formula})};
\node (ifft) [block, below=of shift] {2D Inverse FFT\\(Eq. \ref{inv_DFT_sym})};
\node (output) [block, below=of ifft] {Reconstructed Image $\mathbf{f}_{j,k}$};

\draw [arrow] (input) -- (interp);
\draw [arrow] (interp) -- (spectrum);
\draw [arrow] (spectrum) -- (shift);
\draw [arrow] (shift) -- (ifft);
\draw [arrow] (ifft) -- (output);
\end{tikzpicture}
\caption{Flowchart of the proposed discrete reconstruction algorithm.}
\label{fig:flowchart}
\end{figure}
Leveraging the symmetry and point-wise evaluation properties established in the previous section, we now translate the continuous analytical framework into a practical computational algorithm. This transition from the continuous reconstruction formula~\eqref{inv_D} to a discrete implementation allows for the processing of diverse Radon projection datasets, ranging from medical computed tomography to generic simulated imagery.
\subsection{Discrete Direct Integration Function}
To obtain a discrete spectrum for the proposed reconstruction framework, the function $D(u,v)$ must be evaluated at discrete lattice points. Given that in computational tasks, the spatial parameter $t$ of the projection function $R(t,\theta)$ is bounded within the range $t \in [-T/2, T/2]$, the direct integration formula is reformulated as:
\begin{equation}
\label{DIF_limited}
D(u,v) = \begin{cases} 
\int_{-T/2}^{T/2} R(t,\operatorname{atan2}(v,u)) e^{-i\sqrt{u^2+v^2}t}dt, \\ 
\text{if } v>0 \lor (v=0 \land u>0); \\[1ex] 
\int_{-T/2}^{T/2} R(t, \theta_0)dt, \quad \text{if } u=0, v=0; \\[1ex] 
D^{*}(-u,-v), \quad \text{otherwise}.
\end{cases}
\end{equation}

When moving to a discrete lattice, we substitute $u = 2\pi n/T$ and $v = 2\pi m/T$. For a signal sampled with a unit step $\Delta t = 1$ over an interval $T$, the indices $n,m$ are constrained by the Nyquist frequency to the range $-T/2 \le n,m \le T/2-1$. The discrete representation of the spectrum $\mathbf{D}_{n,m}$ is then defined as:
\begin{equation}
\label{DIF_discrete}
\mathbf{D}_{n,m} = \begin{cases} 
\sum_{t=-T/2}^{T/2-1} \mathbf{R}(t,\theta_{n,m}) e^{-i\frac{2\pi\sqrt{n^2+m^2}}{T}t}, \\ 
\text{if } m>0 \lor (m=0 \land n>0); \\[1ex] 
\sum_{t=-T/2}^{T/2-1} \mathbf{R}(t,\theta_0), \quad \text{if } n=0, m=0; \\[1ex] 
\mathbf{D}_{-n,-m}^*, \quad \text{otherwise},
\end{cases}
\end{equation}
where $\theta_{n,m} = \operatorname{atan2}(m,n)$. 
The reconstructed image $\mathbf{f}_{j,k}$ is subsequently obtained via the 2D Inverse Discrete Fourier Transform (IDFT):
\begin{equation}
\label{inv_DFT_sym}
\mathbf{f}_{j,k} = \frac{1}{T^2} \sum_{n=-T/2}^{T/2-1} \sum_{m=-T/2}^{T/2-1} \mathbf{D}_{n,m} e^{i\frac{2\pi}{T}(n j + m k)},
\end{equation}
where $j,k \in [-T/2, T/2-1]$. 

While \eqref{inv_DFT_sym} can be computed directly, the use of the Fast Fourier Transform (FFT) algorithm is preferred for computational efficiency. To ensure compatibility with standard Inverse Fast Fourier Transform (IFFT) implementations, which typically expect the zero-frequency component to be at the first element of the array, the centered spectral matrix $\mathbf{D}_{n,m}$ (where $n,m \in [-T/2, T/2-1]$) is mapped to a shifted matrix $\mathbf{D}^{\text{sh}}_{n',m'}$ with indices $n',m' \in [0, T-1]$ via a circular shift:
\begin{equation}
\label{shift_formula}
\mathbf{D}^{\text{sh}}_{n',m'} = \mathbf{D}_{(n' + \frac{T}{2}) \bmod T - \frac{T}{2}, (m' + \frac{T}{2}) \bmod T - \frac{T}{2}}.
\end{equation}

\subsection{Discrete Projection Data and Interpolation}
Let the acquired discrete Radon transform projections (sinogram data) be represented by a 2D array:
\begin{equation}
\label{discrete_projections}
\mathbf{R}_{t,a} = R(t, \varphi_a),
\end{equation}
where $t = -T/2, \dots, T/2-1$ is the detector index, and $a = 0, \dots, A-1$ is the angular index with $\varphi_a = a\pi/A$. Since the evaluation of \eqref{DIF_discrete} requires the Radon transform values at arbitrary angles $\theta_{n,m}$, an angular interpolation is performed:
\begin{equation}
\label{interpolation_formula}
\mathbf{R}(t, \theta) \approx \mathcal{I}_{\varphi}[\{\mathbf{R}_{t,a}\}_{a=0}^{A-1}; \theta] = \sum_{a=0}^{A-1} w_a(\theta) \mathbf{R}_{t,a},
\end{equation}
where $w_a(\theta)$ represents the interpolation weights, and $\mathcal{I}_{\varphi}$ is the interpolation operator.

\subsection{Algorithmic Implementation}
The complete procedure for image reconstruction is illustrated in Fig. \ref{fig:flowchart}. The implementation consists of the following steps:
\begin{enumerate}
    \item \textbf{Data Normalization:} Ensure the input $\mathbf{R}_{t,a}$ is zero-padded or centered on the symmetric range $t \in [-T/2, T/2-1]$.
    \item \textbf{Spectral Mapping:} For each frequency pair $(n,m)$, determine the angle $\theta = \operatorname{atan2}(m,n)$ and compute the corresponding projection values via \eqref{interpolation_formula}. 
  \item \textbf{Spectrum Assembly:} Compute $\mathbf{D}_{n,m}$ 
 as defined in \eqref{DIF_discrete}. Incorporating the Hermitian symmetry 
$\mathbf{D}_{n,m} = \mathbf{D}^*_{-n,-m}$
 results in a streamlined representation that simplifies the integration logic and provides a twofold increase in computational speed.
    \item \textbf{Spectral Alignment:} Reorder the spectral matrix to align the zero-frequency component with the origin using the circular shift operation \eqref{shift_formula}. This step ensures the matrix indexing is compatible with standard discrete transform conventions.
    \item \textbf{Image Generation:} Perform the 2D IDFT and apply the reciprocal spatial shift to obtain the final representation $\mathbf{f}_{j,k}$ in the centered coordinate system.
\end{enumerate}

\section{Numerical Results}

This section evaluates the performance of the proposed algorithm in reconstructing the target object function \(\mathbf{f}_{j,k}\) in comparison with the FBP method. The projection data were generated using a discrete projection operator based on the framework in \cite{herman2009fundamentals}, utilizing bicubic interpolation to minimize approximation errors. To maintain high-fidelity reconstruction standards, the FBP baseline also employs cubic convolution interpolation during the back-projection stage, consistent with the spline-based processing established in \cite{unser1999splines}.  
Beyond these algorithmic settings, we further ensure a rigorous comparison by evaluating two calibrated variants: FBP-M (Mean-matched) and FBP-MS (Mean and Sigma-matched). The rationale for this calibration stems from the spectral properties derived via the DIT theorem. Specifically, \eqref{DIF_discrete} demonstrates that the zero-frequency component (the image mean) is analytically equivalent to the sum of any projection row. In practice, to achieve robust estimation, we calculate this value as the average sum across all projection rows. Since this information is intrinsically available in the data, a comparison with an unnormalized FBP would be algorithmically trivial. While FBP-M ensures the comparison focuses on reconstruction fidelity rather than intensity bias, FBP-MS is included to show that even when global statistical moments are matched, the proposed method is superior due to its handling of the non-uniform error distribution inherent to FBP.

The simulation and reconstruction experiments were conducted using a C++ implementation developed in Microsoft Visual Studio 2022. While the methodology is described in detail to allow for independent implementation, the source code, project files, and the dataset of ten \(512 \times 512\) test images (shown in Fig.~\ref{fg:images}) are made available on GitHub at \url{https://github.com/Mozerov-iitp/radon-dit/} for the purpose of reproducibility.

\begin{figure*}
\begin{center}
\begin{tabular}{ccccc}
\includegraphics[width=0.35\columnwidth]{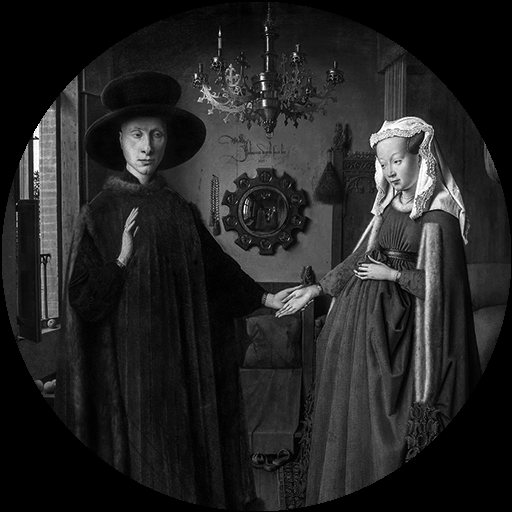} &
\includegraphics[width=0.35\columnwidth]{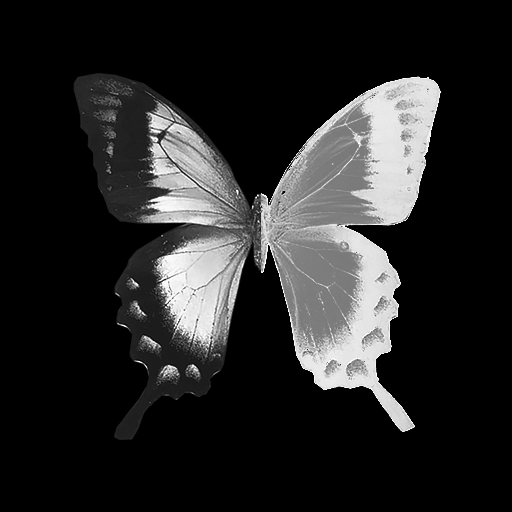} &
\includegraphics[width=0.35\columnwidth]{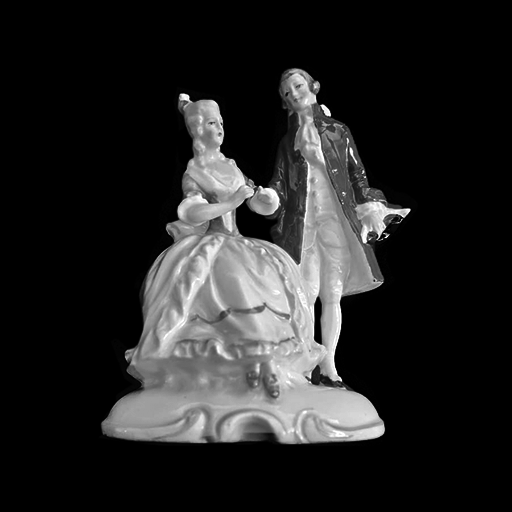} &
\includegraphics[width=0.35\columnwidth]{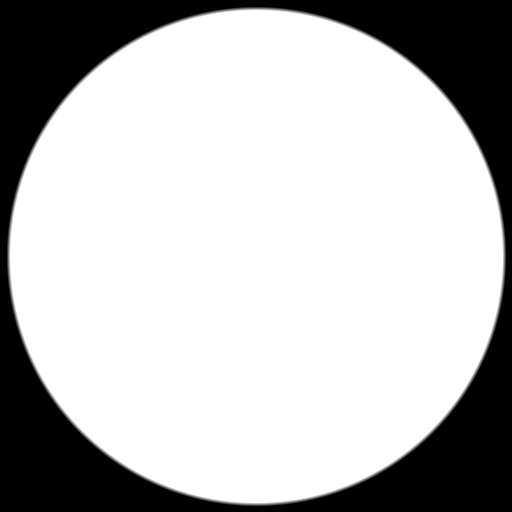} &
\includegraphics[width=0.35\columnwidth]{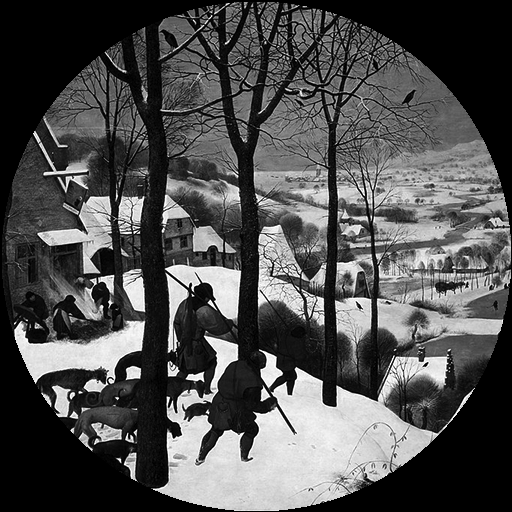} \\
Arnolfini & Butterfly & Dance & Disk &Hunters\\
\includegraphics[width=0.35\columnwidth]{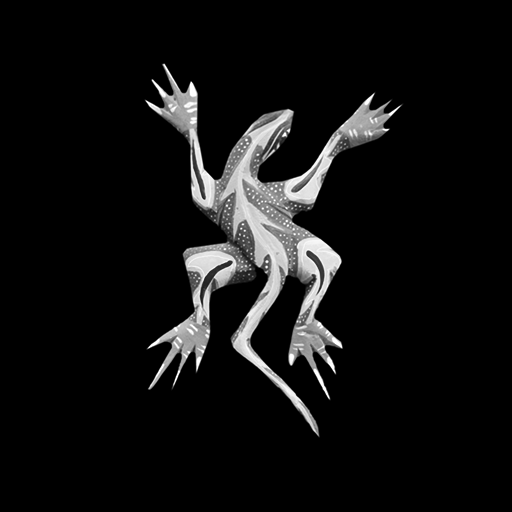} &
\includegraphics[width=0.35\columnwidth]{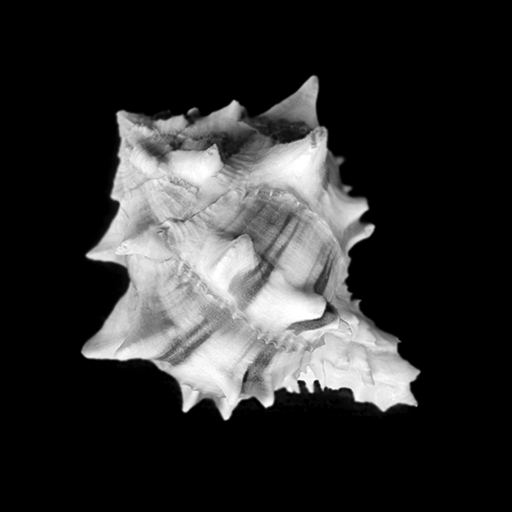} &
\includegraphics[width=0.35\columnwidth]{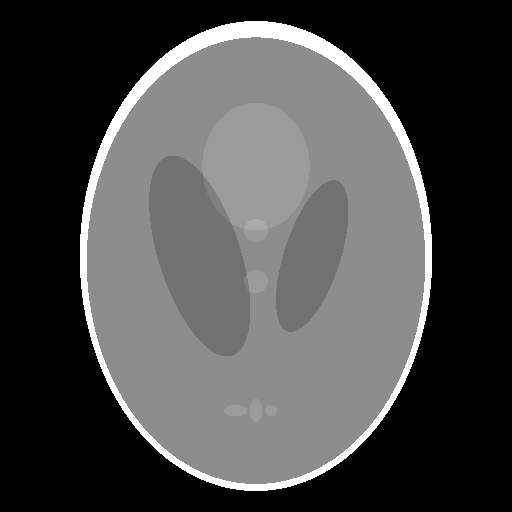} &
\includegraphics[width=0.35\columnwidth]{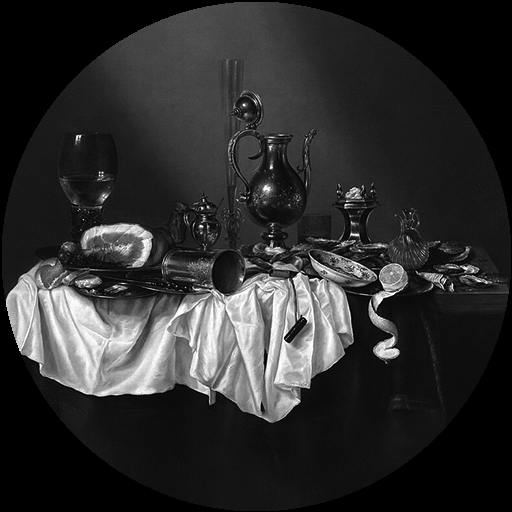} &
\includegraphics[width=0.35\columnwidth]{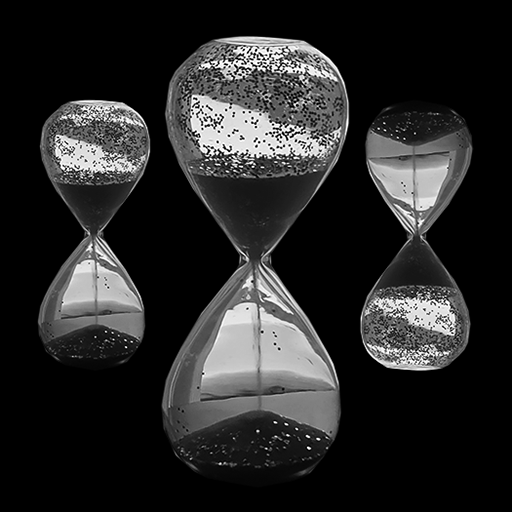} \\
Lizard & Seashell & Shepp-Logan & Still-life &Time\\
\end{tabular}
\caption{The set of \(512 \times 512\) test images used for numerical simulation and reconstruction.}
\label{fg:images}
\end{center}
\end{figure*}

The experimental evaluation is organized into five subsections. First, we examine the effects of inter-row interpolation within the projection data. Second, the preservation of the standard deviation in the reconstructed images is investigated. The third and primary subsection provides a comparative analysis of the proposed method and FBP using various metrics under ideal, noise-free conditions. The fourth subsection evaluates the algorithm's performance in the presence of noise. Finally, the fifth subsection provides a discussion of the results, focusing on the methodological shift from FBP's 1D-filtering paradigm to the proposed direct 2D frequency computation.

\subsection{Impact of Angular Interpolation and Data Sparsity}
The proposed algorithm's sensitivity to angular discretization and the choice of the interpolation operator \(\mathcal{I}_{\varphi }\) is summarized in Table~\ref{tab:results}. To avoid aliasing artifacts within the reconstruction circle, the full projection set is defined as \(A=800\), approximating the \(\frac{\pi}{2}T\) requirement suggested in \cite{herman2009fundamentals} for a \(512 \times 512\) grid.

To evaluate the algorithm's robustness under limited-data conditions, we simulate varying degrees of data sparsity by reducing the projection count down to 2\% (\(M = 16\)).
The results presented are averaged over the ten test images shown in Fig.~\ref{fg:images} to ensure statistical relevance. We evaluate three kernels: Nearest Neighbor (NN), Linear, and Cubic interpolation. To quantify accuracy, we employ the standard Peak Signal-to-Noise Ratio (PSNR) and a Reprojection PSNR (P-RP). The P-RP metric evaluates the fidelity of the solution by projecting the reconstructed image back into the Radon domain and comparing it against the original projection data. This metric is particularly useful for assessing the consistency of the solution with the discrete measurement model. As shown in Table~\ref{tab:results}, Cubic interpolation generally outperforms the NN and Linear methods in noise-free scenarios, particularly under moderate data sparsity (e.g., 10\% to 25\% of the full scan). However, in the presence of White Gaussian Noise (WGN), the performance gap between the Linear and Cubic kernels narrows. In high-noise cases (3\% WGN), the Linear kernel provides slightly higher PSNR values, suggesting that higher-order interpolation may be more sensitive to high-frequency noise components. Notably, the P-RP values remain high even at 50\% sparsity, indicating the robustness of the proposed projection model.

An interesting observation can be made in high-noise scenarios (3\% WGN), where the PSNR slightly improves as the number of projections decreases from 80 to 40. This phenomenon can be attributed to the inherent angular smoothing effect of the interpolation process under high sparsity. In such cases, the interpolation operator effectively acts as a low-pass filter, suppressing high-frequency noise components more aggressively, which leads to a localized increase in the PSNR despite the loss of fine structural details.
\begin{table}[!t]
\centering
\footnotesize 
\setlength{\tabcolsep}{3.2pt} 
\renewcommand{\arraystretch}{1.1}
\caption{Average Reconstruction Performance Across the Test Image Set for Different Interpolation Kernels and Projection Counts Under Noise-Free and Noisy Conditions.}
\label{tab:results}
\begin{tabular*}{\columnwidth}{@{\extracolsep{\fill}} l *{6}{S[table-format=2.2]} @{}}
\toprule
& \multicolumn{2}{c}{NN} & \multicolumn{2}{c}{Linear} & \multicolumn{2}{c}{Cubic} \\
\cmidrule(lr){2-3} \cmidrule(lr){4-5} \cmidrule(lr){6-7}
Projections & {PSNR} & {P-RP} & {PSNR} & {P-RP} & {PSNR} & {P-RP} \\
\midrule

\multicolumn{7}{l}{\textit{Noise-free case}} \\
800 (100\%) & 37.86 & 61.49 & 39.40 & 67.62 & 39.67 & 67.71 \\
400 (50\%)  & 34.72 & 57.66 & 36.67 & 64.44 & 37.19 & 65.54 \\
200 (25\%)  & 30.32 & 52.29 & 31.81 & 60.92 & 32.01 & 62.48 \\
80 (10\%)   & 25.53 & 45.92 & 26.34 & 50.48 & 26.37 & 51.11 \\
40 (5\%)    & 21.89 & 39.45 & 22.59 & 43.36 & 22.60 & 43.69 \\
16 (2\%)    & 18.18 & 32.66 & 18.73 & 35.63 & 18.70 & 35.72 \\
\addlinespace[4pt] 

\multicolumn{7}{l}{\textit{WGN = 1\%}} \\
800 (100\%) & 27.53 & 49.33 & 29.14 & 51.54 & 28.80 & 51.09 \\
400 (50\%)  & 26.31 & 47.93 & 28.00 & 50.90 & 27.78 & 50.60 \\
200 (25\%)  & 24.16 & 45.20 & 25.67 & 49.87 & 25.48 & 49.85 \\
80 (10\%)   & 22.87 & 41.21 & 23.94 & 45.60 & 23.87 & 45.65 \\
40 (5\%)    & 21.12 & 36.00 & 22.01 & 41.16 & 21.95 & 41.18 \\
16 (2\%)    & 18.09 & 31.61 & 18.69 & 34.83 & 18.65 & 34.79 \\
\addlinespace[4pt]

\multicolumn{7}{l}{\textit{WGN = 3\%}} \\
800 (100\%) & 18.69 & 40.11 & 20.32 & 41.96 & 19.82 & 41.41 \\
400 (50\%)  & 18.16 & 39.57 & 19.79 & 41.62 & 19.31 & 41.08 \\
200 (25\%)  & 17.43 & 38.64 & 19.06 & 41.14 & 18.60 & 40.63 \\
80 (10\%)   & 19.48 & 33.48 & 19.07 & 33.48 & 20.68 & 36.12 \\
40 (5\%)    & 18.94 & 30.68 & 20.29 & 34.31 & 20.07 & 33.68 \\
16 (2\%)    & 16.83 & 27.23 & 17.88 & 30.59 & 17.68 & 30.01 \\
\bottomrule
\end{tabular*}
\end{table}

\subsection{Standard Deviation Stability}
This subsection investigates the numerical stability of the reconstruction operators regarding the global scaling of the reconstructed object. While many studies focus solely on fidelity metrics, the ability of an algorithm to inherently preserve the image variance is critical for consistency in discrete frameworks.
We quantify this via the Standard Deviation Ratio (SDR), defined as \(\sigma_{recon} / \sigma_{true}\), where \(\sigma _{recon}\) and \(\sigma _{true}\) denote the empirical standard deviations of the reconstructed image and the ground truth phantom, respectively.
As summarized in Table~\ref{tab:sdr_only}, the proposed DIT-based algorithm demonstrates remarkable stability, maintaining an average SDR of 0.997. It is important to note that the slight deviation from unity is primarily attributed to the discretization errors in the forward projection model. Although the bicubic interpolation used for generating the sinogram significantly outperforms nearest-neighbor methods, it remains a numerical approximation that introduces minor energy dissipation.In contrast, the FBP method exhibits significant scaling fluctuations (SDR ranging from 0.615 to 1.196). This discrepancy arises from the fundamental difference in boundary handling: while our method operates on a discrete Cartesian grid, the FBP implementation implicitly assumes a periodic extension of the signal and is theoretically optimized for the inscribed reconstruction circle. For objects like the 'Disk' that possess high-frequency components near the boundaries of the Radon transform's support, the FBP ramp-filter's periodic nature leads to significant scaling offsets. Such variability underscores the necessity of the calibrated FBP-MS variant used in the following comparative analysis, ensuring that the evaluation focuses strictly on local structural reconstruction quality rather than global intensity mismatches.
\begin{table}[!b]
\centering
\footnotesize
\caption{Preservation of Statistical Moments: Comparison of Standard Deviation Ratio (SDR) Under Ideal Conditions ($A=800$).}
\label{tab:sdr_only}
\begin{tabular*}{\columnwidth}{@{\extracolsep{\fill}} l cc @{}}
\toprule
Image name  & {Proposed DIT (SDR)} & {FBP baseline (SDR)} \\
\midrule
Arnolfini   & 0.994 & 1.061 \\
Butterfly   & 0.997 & 1.116 \\
Dance       & 1.000 & 1.107 \\
Disk        & 1.000 & 0.615 \\
Hunters     & 0.991 & 0.925 \\
Lizard      & 0.998 & 1.196 \\
Seashell    & 1.000 & 1.094 \\
Shepp-Logan & 0.998 & 0.878 \\
Still-life  & 0.997 & 1.111 \\
Time        & 0.995 & 1.054 \\
\midrule
\textbf{Average} & \textbf{0.997} & \textbf{1.016} \\
\bottomrule
\end{tabular*}
\end{table}

\subsection{Comparative Analysis Under Ideal Conditions}
To rigorously evaluate the proposed DIT, we conduct a two-stage analysis under ideal, noise-free conditions. First, we establish a performance baseline using a 'reconstructibility' criterion to determine the fundamental limits of the discrete grid. Second, we provide a direct comparison between DIT and the standard FBP algorithm.
Since any reconstruction process necessitates coordinate transformations, we first compare DIT against a Double Rotation Test (DRT). In this test, the reference image is rotated by \(45^{\circ }\) and then back to \(0^{\circ }\) using bicubic interpolation. This procedure isolates the interpolation errors inherent in discrete rotations—errors that any projection-based reconstruction must theoretically encounter.

As demonstrated in Table~\ref{tab:dit_comparison}, DIT achieves PSNR levels within \(\pm 1\)~dB of the DRT baseline. This indicates that the DIT error is primarily dominated by fundamental grid geometry rather than algorithmic limitations. Notably, DIT significantly outperforms DRT in the P-RP metric (averaging a 13.03~dB improvement), suggesting superior preservation of structural integrity compared to standard bicubic interpolation.

\begin{table}[!t]
\centering
\footnotesize
\caption{Comparison of Proposed DIT and Double Rotation Test in Terms of PSNR and P-RP ($A=800$).}

\label{tab:dit_comparison}
\begin{tabular*}{\columnwidth}{@{\extracolsep{\fill}} l cccc @{}}
\toprule
Image name & \multicolumn{2}{c}{Proposed DIT} & \multicolumn{2}{c}{Double Rotation Test} \\
\cmidrule(lr){2-3} \cmidrule(lr){4-5}
& PSNR & P-RP & PSNR & P-RP \\
\midrule
Arnolfini   & 37.72 & 62.64 & 39.08 & 47.60 \\
Butterfly   & 34.40 & 63.34 & 35.64 & 57.12 \\
Dance       & 43.11 & 70.69 & 43.65 & 57.39 \\
Disk        & 50.32 & 89.78 & 51.53 & 56.65 \\
Hunters     & 30.14 & 58.19 & 31.35 & 50.82 \\
Lizard      & 42.86 & 66.84 & 42.39 & 61.98 \\
Seashell    & 51.55 & 76.26 & 51.10 & 57.86 \\
Shepp-Logan & 34.21 & 64.79 & 35.39 & 52.99 \\
Still-life  & 36.72 & 64.94 & 37.83 & 50.63 \\
Time        & 35.68 & 59.66 & 35.04 & 53.80 \\
\midrule
\textbf{Average} & \textbf{39.67} & \textbf{67.71} & \textbf{40.30} & \textbf{54.68} \\
\bottomrule
\end{tabular*}
\end{table}
Having established that DIT operates near the theoretical precision limit of the grid, we now compare its performance against the conventional FBP method. While the DRT represents an idealized error floor, actual sinogram-based reconstruction involves more complex sampling issues.

To provide a more comprehensive evaluation beyond pixel-wise differences, we introduce the Structural Similarity Index (SSIM) as an additional quality metric. Unlike PSNR, the SSIM index evaluates the degradation of structural information across the entire image, providing a measure that is more consistent with human visual perception \cite{wang2004ssim}.   Table~\ref{tab:dit_fbp_comparison} presents a comparative analysis of DIT, FBP-M, and FBP-MS for a representative subset of phantoms, alongside the Average performance computed over the entire ten-image dataset (as shown in Fig.~\ref{fg:images}). The proposed DIT method consistently demonstrates superior performance across all metrics and projection counts. Notably, at \(A = 800\), DIT achieves an average SSIM of 0.999, indicating a near-perfect preservation of structural integrity, whereas the best-performing FBP variant (FBP-MS) reaches only 0.968.A striking disparity is observed in the reconstruction of the 'Disk' phantom. While DIT yields a PSNR of 50.32 dB and an SSIM of 0.999, both FBP variants exhibit a significant performance drop, with PSNR values falling below 14 dB. This failure is a direct consequence of the 'Disk' possessing high-frequency components at the boundaries of the Radon transform's support. The traditional FBP, due to the discretization of the ramp filter and the periodic assumptions inherent in its implementation, suffers from severe boundary artifacts and energy dissipation. DIT, by contrast, maintains strict consistency with the discrete Cartesian grid, ensuring high fidelity even for such geometrically challenging objects. This stability is a key factor in the superior Average metrics observed for the proposed framework.

Table~\ref{tab:dit_fbp_comparison}
\begin{table*}[!t]
\centering
\footnotesize

\caption{Quantitative Comparison of Reconstruction Metrics (PSNR, P-RP, and SSIM) for DIT and Calibrated FBP Variants. Individual Results are Shown for Representative Phantoms, While \textbf{Average} Values are Computed Over the Full Ten-Image Dataset.}

\label{tab:dit_fbp_comparison}
\begin{tabular*}{\textwidth}{@{\extracolsep{\fill}} l *{9}{c} @{}}
\toprule
\multicolumn{1}{l}{Images} &
  \multicolumn{3}{c}{Proposed DIT} &
  \multicolumn{3}{c}{FBP-M} &
  \multicolumn{3}{c}{FBP-MS} \\
\cmidrule(lr){2-4} \cmidrule(lr){5-7} \cmidrule(lr){8-10}
& PSNR & P-RP & SSIM & PSNR & P-RP & SSIM & PSNR & P-RP & SSIM \\
\midrule
\multicolumn{10}{l}{Projections = 800 (100\%)} \\
\midrule
Disk      & 50.32 & 89.78 & 0.999 & 13.41 & 23.62 & 0.759 & 11.62 & 21.10 & 0.850 \\
Hunters   & 30.14 & 58.19 & 0.996 & 18.75 & 26.23 & 0.928 & 18.27 & 25.38 & 0.931 \\
Lizard    & 42.86 & 66.84 & 0.999 & 26.76 & 29.97 & 0.983 & 38.88 & 55.95 & 0.999 \\
Shepp-Logan & 34.21 & 64.79 & 0.998 & 25.22 & 28.54 & 0.984 & 27.13 & 34.94 & 0.992 \\
\midrule
\textbf{Average} & \textbf{39.67} & \textbf{67.71} & \textbf{0.999} & \textbf{24.24} & \textbf{29.70} & \textbf{0.954} & \textbf{27.12} & \textbf{35.01} & \textbf{0.968} \\
\midrule
\multicolumn{10}{l}{Projections = 400 (50\%)} \\
\midrule
Disk      & 50.11 & 90.43 & 0.999 & 13.41 & 23.62 & 0.759 & 11.62 & 21.10 & 0.850 \\
Hunters   & 26.53 & 55.68 & 0.991 & 18.63 & 26.26 & 0.927 & 18.17 & 25.43 & 0.930 \\
Lizard    & 39.45 & 62.99 & 0.999 & 26.75 & 29.97 & 0.983 & 38.75 & 55.99 & 0.999 \\
Shepp-Logan & 32.78 & 65.28 & 0.998 & 25.14 & 28.54 & 0.983 & 26.97 & 34.93 & 0.992 \\
\midrule
\textbf{Average} & \textbf{37.19} & \textbf{65.54} & \textbf{0.997} & \textbf{24.19} & \textbf{29.71} & \textbf{0.954} & \textbf{27.04} & \textbf{35.02} & \textbf{0.968} \\
\midrule
\multicolumn{10}{l}{Projections = 200 (25\%)} \\
\midrule
Disk      & 49.90 & 90.43 & 0.999 & 13.41 & 23.62 & 0.759 & 11.62 & 21.10 & 0.850 \\
Hunters   & 21.28 & 53.00 & 0.970 & 17.64 & 26.44 & 0.914 & 17.27 & 25.79 & 0.916 \\
Lizard    & 31.87 & 57.90 & 0.994 & 26.47 & 29.95 & 0.982 & 35.49 & 53.49 & 0.997 \\
Shepp-Logan & 30.84 & 64.48 & 0.996 & 24.87 & 28.54 & 0.982 & 26.46 & 34.86 & 0.990 \\
\midrule
\textbf{Average} & \textbf{32.01} & \textbf{62.48} & \textbf{0.990} & \textbf{23.50} & \textbf{29.82} & \textbf{0.950} & \textbf{25.87} & \textbf{34.93} & \textbf{0.964} \\
\midrule
\multicolumn{10}{l}{Projections = 40 (5\%)} \\
\midrule
Disk      & 33.40 & 57.41 & 0.999 & 13.41 & 23.58 & 0.759 & 11.62 & 21.10 & 0.850 \\
Hunters   & 15.23 & 36.88 & 0.875 & 11.06 & 24.10 & 0.728 & 11.85 & 24.78 & 0.734 \\
Lizard    & 20.73 & 41.30 & 0.919 & 17.33 & 25.46 & 0.853 & 19.39 & 28.95 & 0.880 \\
Shepp-Logan & 24.49 & 45.43 & 0.985 & 22.38 & 28.13 & 0.971 & 22.77 & 33.00 & 0.978 \\
\midrule
\textbf{Average} & \textbf{22.60} & \textbf{43.69} & \textbf{0.948} & \textbf{17.21} & \textbf{27.51} & \textbf{0.867} & \textbf{18.14} & \textbf{29.38} & \textbf{0.887} \\
\bottomrule
\end{tabular*}
\end{table*}
The close alignment between DIT and DRT performance in Table~\ref{tab:dit_comparison} underscores that the proposed method effectively reaches the numerical 'ceiling' of the discrete grid. In contrast, traditional reconstruction methods like FBP introduce additional errors due to the discretization of the ramp filter and the back-projection operator. Consequently, FBP is excluded from this baseline comparison, as its performance metrics fall substantially below the DRT limit. In the following analysis, we compare DIT with FBP to quantify the practical gains achieved by maintaining consistency with the discrete grid geometry.

\begin{figure}
\begin{center}
\begin{tabular}{cc}
\includegraphics[width=0.45\columnwidth]{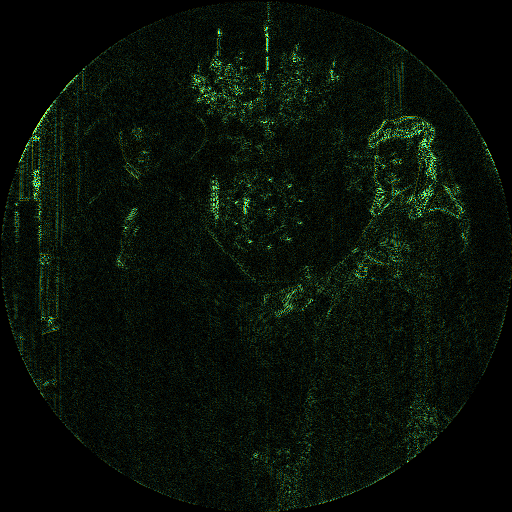} &
\includegraphics[width=0.45\columnwidth]{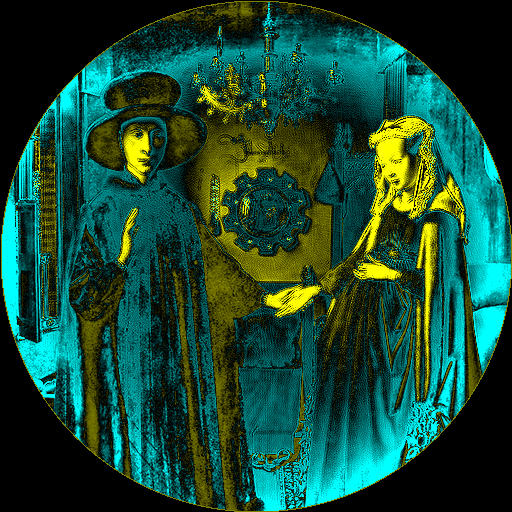}  \\
\includegraphics[width=0.45\columnwidth]{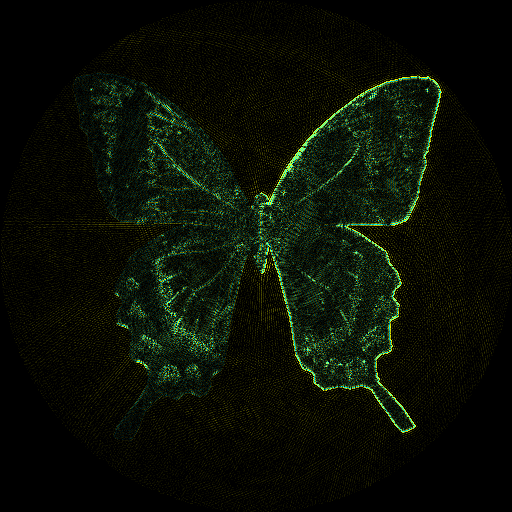} &
\includegraphics[width=0.45\columnwidth]{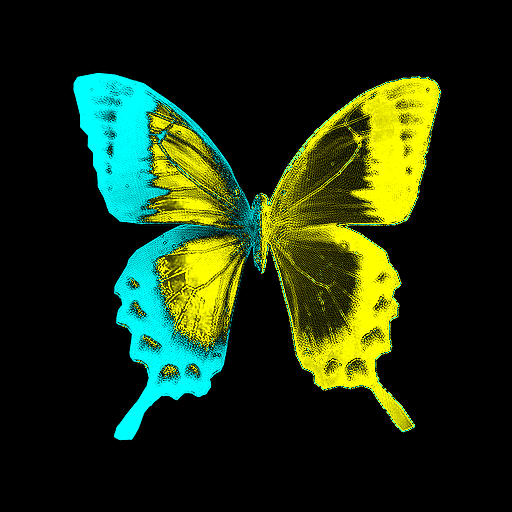}  \\
\includegraphics[width=0.45\columnwidth]{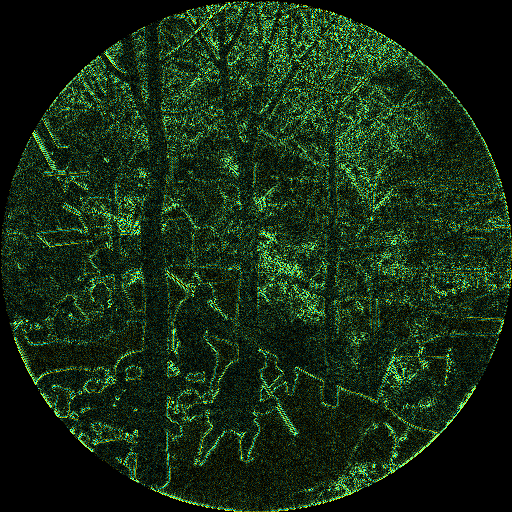} &
\includegraphics[width=0.45\columnwidth]{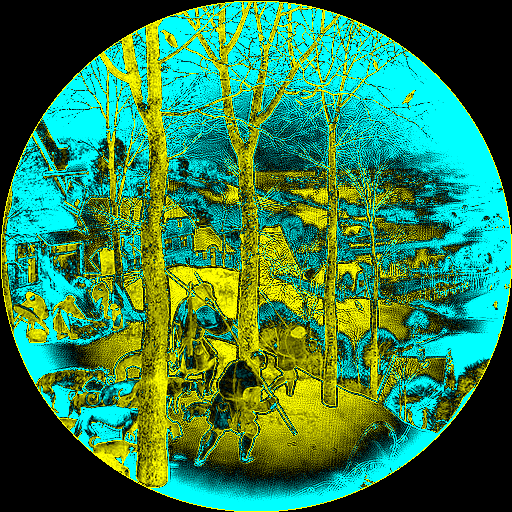}  \\
\includegraphics[width=0.45\columnwidth]{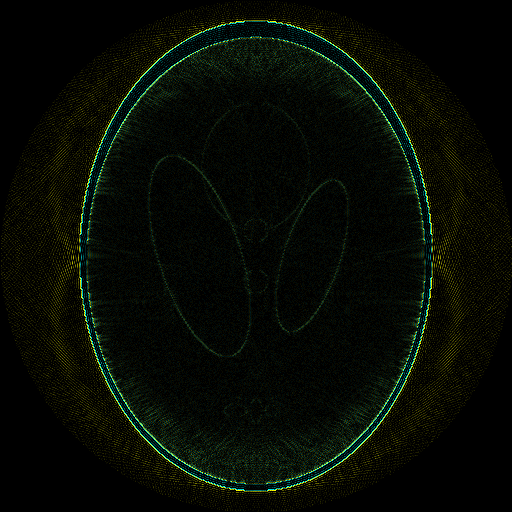} &
\includegraphics[width=0.45\columnwidth]{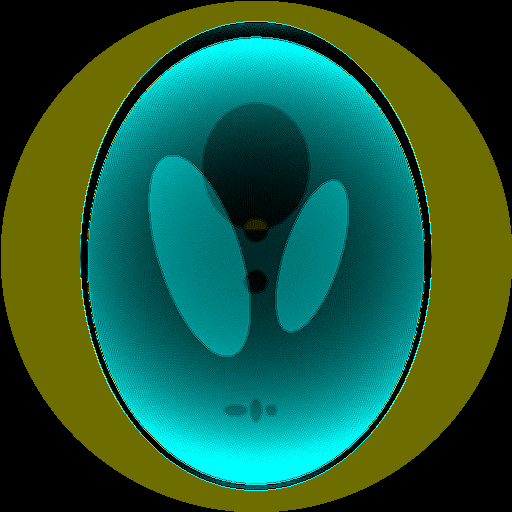}  \\
\includegraphics[width=0.45\columnwidth]{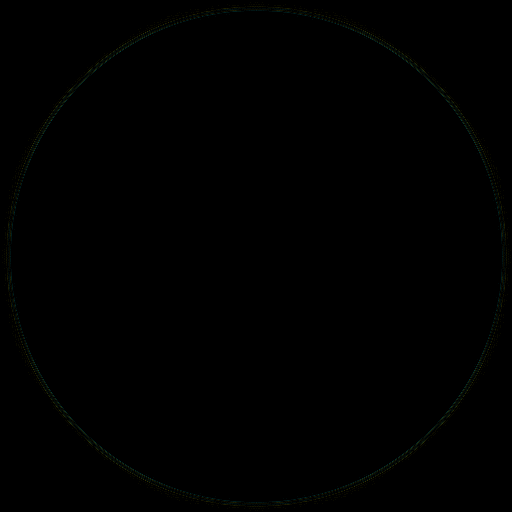} &
\includegraphics[width=0.45\columnwidth]{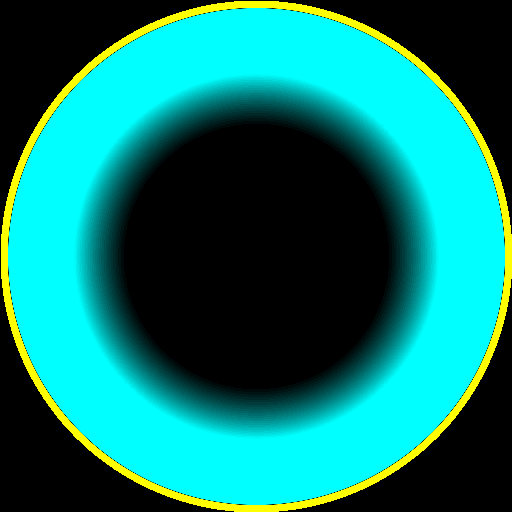}  \\
DIT & FBP \\

\end{tabular}
\caption{Comparison of reconstruction error maps for representative phantoms at $A=800$. The DIT method (left) exhibits a more uniform error distribution and lacks the structured "cupping" artifacts visible in the FBP results (right). The color scale represents the signed reconstruction error, where yellow indicates positive deviations and cyan denotes negative deviations from the ground truth.}
\label{fg:err_map}
\end{center}
\end{figure}
\begin{figure}
\begin{center}
\begin{tabular}{cc}
\includegraphics[width=0.45\columnwidth]{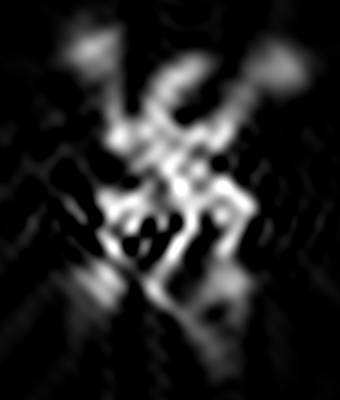} &
\includegraphics[width=0.45\columnwidth]{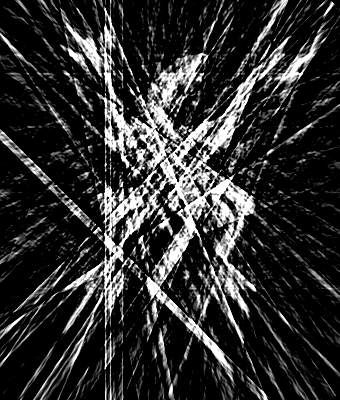}  \\
\includegraphics[width=0.45\columnwidth]{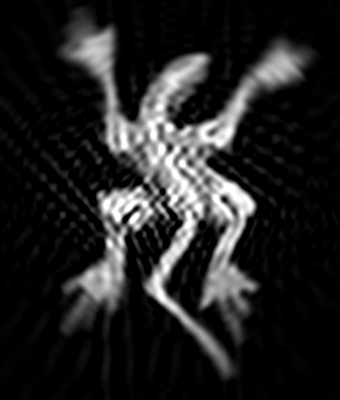} &
\includegraphics[width=0.45\columnwidth]{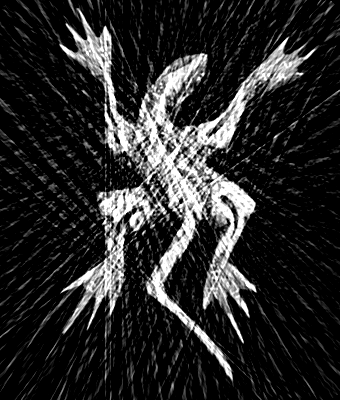}  \\
\includegraphics[width=0.45\columnwidth]{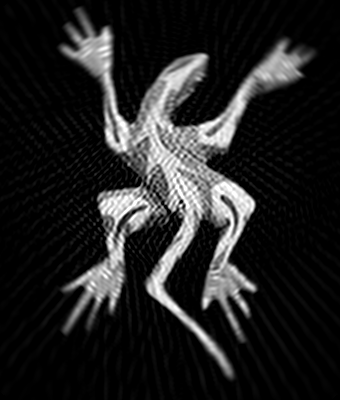} &
\includegraphics[width=0.45\columnwidth]{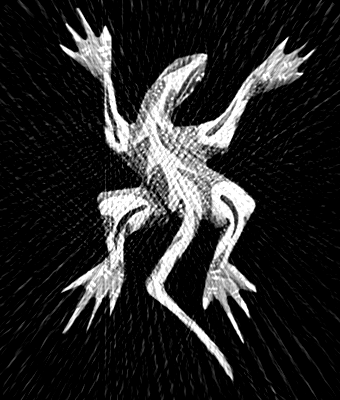}  \\
\includegraphics[width=0.45\columnwidth]{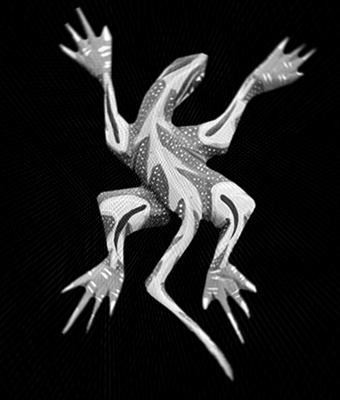} &
\includegraphics[width=0.45\columnwidth]{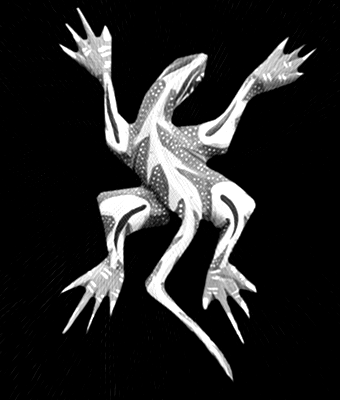}  \\
DIT & FBP \\

\end{tabular}
\caption{Visual comparison of reconstruction quality for the 'Lizard' phantom under varying degrees of data sparsity ($A \in \{16, 40, 80, 200\}$). DIT consistently provides superior structural coherence compared to FBP.}
\label{fg:rec_lw}
\end{center}
\end{figure}
Although FBP reconstructions often appear visually indistinguishable from the ground truth due to the smooth, low-frequency nature of the intensity distortions, the error maps in Fig. \ref{fg:err_map} reveal the underlying systematic inaccuracies. The FBP method exhibits pronounced radial dependencies and boundary artifacts, such as the "cupping" effect. In contrast, the DIT error maps remain significantly more uniform and lower in magnitude. This confirms that DIT provides a more consistent solution across the entire field, eliminating the local statistical biases that are often masked by the human visual system's tolerance to slow-varying intensity shifts.

The qualitative performance of the DIT algorithm under high data sparsity is shown in Fig. \ref{fg:rec_lw}. At extremely low projection counts (e.g., \(A=16\)), the angular interpolation effectively acts as a low-pass smoothing filter, suppressing the sharp streaking artifacts typical of FBP. While both methods exhibit degradation as \(A\) decreases, the DIT-based reconstruction maintains better structural coherence. As the sampling density increases to \(A=200\), DIT restores high-frequency details with higher numerical fidelity, even if the visual difference between the two methods becomes less perceptible to the human eye due to the low-frequency nature of the FBP error distribution.

\subsection{Comparative Analysis Under Noisy Conditions}
The current analysis is limited to observing the algorithm's inherent noise sensitivity and the application of a basic Gaussian filter, both at the projection stage and as a post-processing step. As noted in the Introduction, the primary objective of this study is theoretical; thus, the comprehensive suppression of noise and artifacts is not the central focus. Developing robust denoising strategies remains a significant and complex challenge, which is the subject of ongoing research. Given the extensive existing literature—ranging from tomography-specific iterative methods \cite{beister2012iterative} to general post-filtration techniques such as Non-Local Means (NLM) \cite{buades2005non}, edge-preserving filters \cite{tomasi1998bilateral}, and BM3D \cite{dabov2007image}—this study utilizes standard Gaussian smoothing merely as a baseline for noise management. The filter's kernel width was empirically set to half the noise percentage (e.g., 0.5 for 1\% noise).

\begin{table}[!b]
\centering
\footnotesize
\caption{Comparison of Proposed DIT and FBP-M Under Different Noise Levels and Denoising Strategies (None, Pre-processing, Post-processing) in Terms of PSNR (dB), P-RP (\%), and SSIM.}
\label{tab:noise_comparison}
\begin{tabular*}{\columnwidth}{@{\extracolsep{\fill}} l ccc ccc @{}}
\toprule
& \multicolumn{3}{c}{Proposed DIT} & \multicolumn{3}{c}{FBP-M} \\
\cmidrule(lr){2-4} \cmidrule(lr){5-7}
Noise level (\%) & PSNR & P-RP & SSIM & PSNR & P-RP & SSIM \\
\midrule
\multicolumn{7}{c}{No denoising} \\
\midrule
0.5\% & 33.26 & 56.85 & 0.996 & 23.51 & 29.73 & 0.951 \\
1\%   & 29.14 & 51.56 & 0.989 & 22.88 & 29.77 & 0.945 \\
2\%   & 23.76 & 45.59 & 0.962 & 21.16 & 29.91 & 0.925 \\
3\%   & 20.31 & 41.97 & 0.923 & 19.41 & 30.09 & 0.901 \\
\midrule
\multicolumn{7}{c}{Pre-processing denoising} \\
\midrule
1\%   & 30.07 & 52.27 & 0.993 & 23.75 & 29.75 & 0.950 \\
2\%   & 26.90 & 47.46 & 0.984 & 22.88 & 29.68 & 0.943 \\
3\%   & 25.22 & 44.82 & 0.975 & 22.01 & 29.55 & 0.933 \\
\midrule
\multicolumn{7}{c}{Post-processing denoising} \\
\midrule
1\%   & 29.83 & 51.68 & 0.992 & 23.80 & 29.80 & 0.951 \\
2\%   & 26.45 & 46.26 & 0.983 & 23.21 & 30.00 & 0.949 \\
3\%   & 24.72 & 43.36 & 0.975 & 22.20 & 30.24 & 0.945 \\
\bottomrule
\end{tabular*}
\end{table}

Table \ref{tab:noise_comparison} provides a comprehensive quantitative evaluation of the proposed method under varying noise levels (0.5\% to 3\% WGN). The results demonstrate that the DIT algorithm consistently outperforms the FBP-M baseline across all metrics. Notably, even without any explicit denoising, the DIT reconstruction at 1\% noise achieves a PSNR of 29.14 dB, which is superior to FBP-M results even when the latter is augmented with filtering. 

Furthermore, the performance of DIT remains remarkably consistent whether denoising is applied at the projection stage or as post-processing. This symmetry indicates an intrinsic robustness to noise propagation within the DIT framework. The application of Gaussian smoothing further enhances this stability, maintaining an SSIM above 0.97 even at the highest noise level (3\%), confirming the algorithm's reliability in non-ideal measurement scenarios.

\subsection{Discussion of Numerical Results}
The experimental results presented above lead to a broader methodological conclusion. We consider this work to be fundamentally theoretical, offering a re-evaluation of the established perspective on the inverse Radon problem. The core distinction of the proposed DIT lies in the transition from the 1D filtering paradigm to the direct and independent computation of 2D spectral components.

Regarding computational efficiency, the DIT algorithm exhibits a complexity of $O(N^3)$, which is consistent with the scaling of the FBP framework when the number of projections $K$ is proportional to the image size $N$. However, a key distinction is that the DIT computational cost remains independent of $K$. By leveraging Hermitian symmetry and computing the 2D spectrum directly, the proposed implementation achieves performance levels comparable to or exceeding FBP in standard reconstruction scenarios. In practical terms, a $512 \times 512$ image is reconstructed in under 2 seconds on a standard laptop, providing a high-fidelity solution in a single direct pass.
Furthermore, the method exhibits high mathematical consistency. Under ideal conditions, the reprojection of the reconstructed image achieves a PSNR of 60–70 dB relative to the original projection data. Such a high level of fidelity indicates that the reconstruction reaches the limits of numerical precision, making it theoretically unlikely that iterative methods—which typically aim to minimize this reprojection error—could offer any significant improvement. 

Finally, by comparing DIT against a "pure" FBP baseline without standard engineering heuristics like zero-padding, we demonstrate that our approach resolves reconstruction inaccuracies at the fundamental level of the integral transformation itself. This confirms that the DIT framework provides a methodologically distinct and computationally viable alternative to the classical Fourier-based paradigm.

\section{Conclusion}

In this work, we have introduced the Direct Integration Theorem (DIT) as a rigorous mathematical framework that bridges the gap between continuous Radon theory and its discrete implementation. By providing a methodologically distinct formulation derived from the Central Slice Theorem, the DIT-based approach fundamentally resolves a long-standing trade-off in computed tomography: the choice between spectral distortions from ramp-filtering and interpolation inaccuracies in the frequency domain. 

Our findings demonstrate that the resulting reconstruction algorithm achieves quasi-exact fidelity—reaching the limits of numerical precision with reprojection PSNR levels of 60–70 dB—and maintains the structural and statistical integrity of the data without relying on empirical heuristics such as zero-padding. While traditional Filtered Back Projection (FBP) inherently suffers from non-unitary artifacts and systematic intensity shifts like "cupping", the DIT-based approach addresses these issues at the fundamental level of the integral transformation through direct 2D spectral synthesis. 

Furthermore, we have shown that the DIT-based algorithm is computationally viable, offering performance comparable to the FBP baseline while remaining independent of the number of projections. This efficiency, combined with its high mathematical consistency, positions the DIT framework as a superior alternative to iterative solvers, providing "iteration-grade" quality in a single direct pass. Comparative simulations have validated the method's superiority in ensuring a uniform error distribution and maintaining robustness under various measurement conditions, establishing a new paradigm for consistent discrete solutions to the inverse Radon problem.

\section*{Acknowledgments}
The author would like to thank Daniil M. Mozerov for his valuable assistance with the numerical verification and helpful discussions during the development of the Direct Integration Theorem.

\bibliographystyle{IEEEtran} 
\bibliography{dit} 

\begin{IEEEbiography}[{\includegraphics[width=1in,height=1.25in,clip,keepaspectratio]{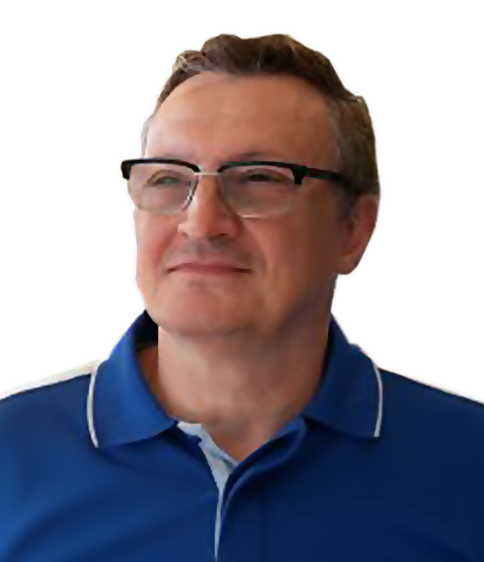}}]{Mikhail G. Mozerov}
received the M.S. degree in physics from Moscow State University in 1982, and the Ph.D. degree in digital image processing from the Institute for Information Transmission Problems (IITP), Moscow, in 1995. He is currently a Senior Scientist with the IITP, Russian Academy of Sciences. From 2006 to 2010, he was a Ramon y Cajal Fellow with the Autonomous University of Barcelona. His research interests include computer vision and image processing.
\end{IEEEbiography}

\end{document}